\pgfplotsset{compat=newest}
\crefname{equation}{}{}
\Crefname{equation}{}{}
\crefname{figure}{Fig.}{Figs.}
\Crefname{figure}{Figure}{Figures}
\newcommand{\mi}[1]{\ensuremath{\mathit{#1}}}
\newcommand{\slv}[1]{\textsc{#1}} 
\newcommand{\alphaslv}{\slv{Alpha}\xspace}
\newcommand{\clasp}{\slv{Clasp}}
\renewcommand{\orcidID}[1]{\href{https://orcid.org/#1}{\hbox{\includegraphics[height=10pt]{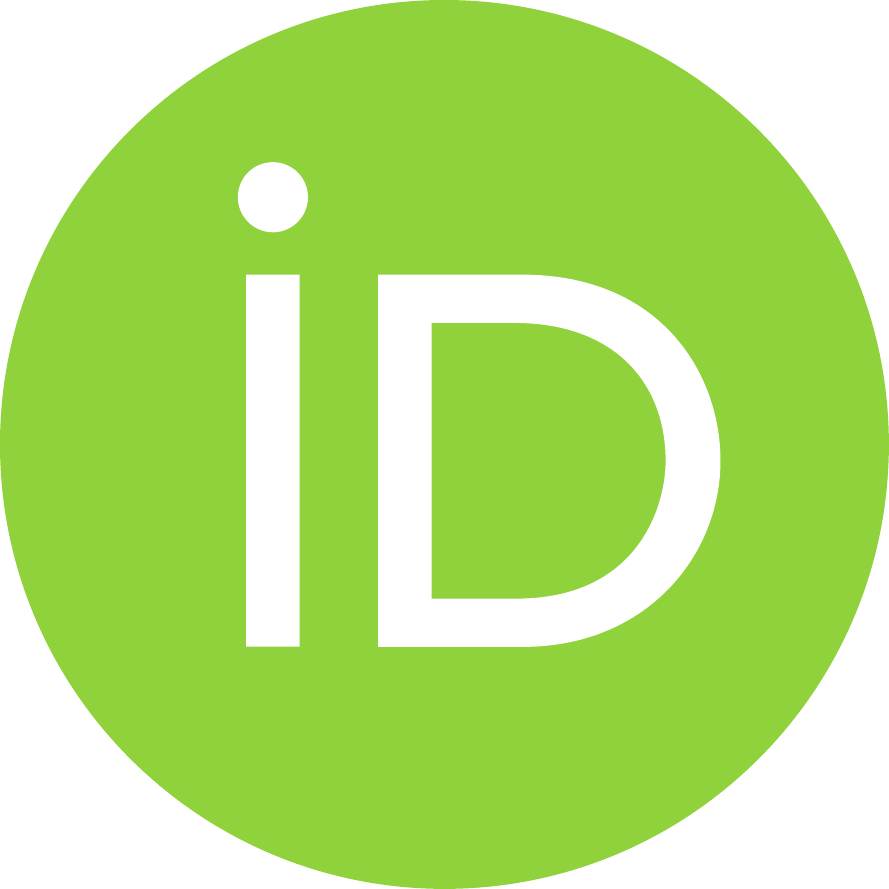}}}}
\newcommand{\prog}{\ensuremath{P}}
\newcommand{\atoms}{\ensuremath{\mathcal{A}}}
\newcommand{\assignment}{\ensuremath{A}}
\newcommand{\assignmentp}{\ensuremath{A}^+}
\newcommand{\assignmentm}{\ensuremath{A}^-}
\newcommand{\preds}{\ensuremath{\mathcal{P}}}
\newcommand{\nafsymbol}{\ensuremath{not}}
\newcommand{\naf}[1]{\ensuremath{\nafsymbol~#1}}
\newcommand{\head}{\ensuremath{\mi{H}}}
\newcommand{\body}{\ensuremath{\mi{B}}}
\newcommand{\bodyp}{\ensuremath{\mi{B}^+}}
\newcommand{\bodyn}{\ensuremath{\mi{B}^-}}
\newcommand{\ground}{\ensuremath{\mi{grd}}}
\newcommand{\facts}{\ensuremath{\mathit{facts}}}
\newcommand{\pred}{\ensuremath{\mathit{pred}}}
\newcommand{\heads}{\ensuremath{\mathit{heads}}}
\newcommand{\HB}{\ensuremath{\mathcal{HB}}}
\newcommand{\As}{\ensuremath{\mathcal{A}}}
\newcommand{\gratoms}{\ensuremath{\mi{At}_\mi{grd}}}
\newcommand{\consts}{\ensuremath{\mathcal{C}}}
\newcommand{\variables}{\ensuremath{\mathcal{V}}}
\newcommand{\gmem}{\mi{G}}
\newcommand{\gmemprime}{\mi{G}'}
\newcommand{\gmemdprime}{\mi{G}''}
\newcommand{\gstrat}{\mi{gs}}
\newcommand{\gsdefault}{\ensuremath{\gstrat_{\mi{def}}}}
\newcommand{\gskunassigned}{\ensuremath{\gstrat}}
\newcommand{\gstrataccu}{\ensuremath{\gstrat^{\mi{accu}}}}
\newcommand{\gsdefaultaccu}{\ensuremath{\gstrataccu_{\mi{def}}}}
\newcommand{\gskunassignedaccu}{\ensuremath{\gstrataccu}}
\newcommand{\vars}{\ensuremath{\mi{vars}}}
\def\hyph{-\penalty0\hskip0pt\relax}
\begin{document}
\title{Degrees of Laziness in Grounding\thanks{The final authenticated publication is available online at \url{https://doi.org/10.1007/978-3-030-20528-7_22}}}
\subtitle{Effects of Lazy-Grounding Strategies on ASP Solving}

%
%
\author{Richard Taupe\inst{1,2}\textsuperscript{(\Letter)}\orcidID{0000-0001-7639-1616} \and Antonius Weinzierl\inst{3}\orcidID{0000-0003-2040-6123} \and Gerhard Friedrich\inst{2}\orcidID{0000-0002-1992-4049}}
\authorrunning{R. Taupe et al.}
%
\institute{
Siemens AG Österreich\\
\email{richard.taupe@siemens.com}\\
\and
Alpen-Adria-Universität Klagenfurt\\
\email{gerhard.friedrich@aau.at} \and
Technische Universität Wien, Institut für Logic and Computation, KBS Group\\
\email{weinzierl@kr.tuwien.ac.at}
}
\maketitle              
\begin{abstract}
The traditional ground-and-solve approach to Answer Set Programming (ASP) suffers from the grounding bottleneck, which makes large-scale problem instances unsolvable.
Lazy grounding is an alternative approach that interleaves grounding with solving and thus uses space more efficiently.
The limited view on the search space in lazy grounding poses unique challenges, however, and can have adverse effects on solving performance.
In this paper we present a novel characterization of degrees of laziness in grounding for ASP, i.e. of compromises between lazily grounding as little as possible and the traditional full grounding upfront.
We investigate how these degrees of laziness compare to each other formally as well as, by means of an experimental analysis using a number of benchmarks, in terms of their effects on solving performance.
Our contributions are the introduction of a range of novel lazy grounding strategies, a formal account on their relationships and their correctness, and an investigation of their effects on solving performance.
Experiments show that our approach performs significantly better than state-of-the-art lazy grounding in many cases.

\keywords{Answer Set Programming \and Lazy grounding \and Heuristics.}
\end{abstract}
\section{Introduction}
\label{intro}

Answer Set Programming (ASP) \cite{aspbook-gelfond,aspbook-baral,Gebser.2012,Gelfond.1988}
is a declarative knowledge representation formalism
that has been applied in a variety of industrial and scientific applications.
The success of ASP is rooted in efficient solvers
such as  \slv{clingo} \cite{DBLP:conf/lpnmr/GebserKK0S15} or \slv{DLV} \cite{Leone.2006}, which apply
the \emph{ground-and-solve} approach, i.e. they first instantiate the given non-ground program and then apply a number of efficient solving techniques to find the answer sets of the variable-free (i.e., ground) program.

This approach suffers from the \emph{grounding bottleneck} since in many practical and industrial applications the ground program is too large to fit in memory.
Problem instances in industrial applications can be quite large and cannot be grounded by modern grounders such as \slv{gringo} \cite{DBLP:conf/lpnmr/GebserKKS11} or \slv{I-DLV} \cite{DBLP:journals/ia/CalimeriFPZ17} in acceptable time and/or space \cite{Eiter.2008}.

Lazy-grounding ASP systems such as
\slv{gasp} \cite{gasp}, \slv{ASPeRiX} \cite{Lefevre.2017}, \slv{OMiGA} \cite{omiga_system}, and most recently
\alphaslv \cite{alpha_technical} successfully avoid the grounding bottleneck by
interleaving grounding and solving, but suffer from substandard search performance.
For practical applications one can now decide between running out of memory with a ground-and-solve system, or running out of time with a lazy-grounding system.
Since the grounding bottleneck is an inherent issue of the ground-and-solve approach, improvements of lazy-grounding ASP solving are an important contribution for dealing with large, real-world problem instances.

Therefore, we equipped \alphaslv with state-of-the art heuristics successfully employed by other ASP solvers, namely MOMs \cite{Pretolani.1993} for initialization of heuristic scores and VSIDS \cite{Moskewicz.2001} for their dynamic modification.
Both have been implemented in a similar fashion as in \slv{clasp} \cite{clasp_journal}.
Somewhat surprisingly, however, those heuristics improved performance of lazy-grounding solving by a much smaller degree than expected.
A subsequent investigation revealed that lazy grounding does not provide sufficient information on the search space for such heuristics to perform adequately, because by grounding lazily the solver has only a limited view on the search space.
This is a novel challenge for ASP solving, which traditional ground-and-solve ASP solvers did not have to face.

In order to improve solving performance this work investigates ways to offset the limited view of the search space in lazy-grounding ASP solving.
We explore various lazy-grounding strategies to find compromises between full upfront grounding and largely blind search heuristics.
In summary, the contributions of this work are:
\begin{itemize}
	\item the introduction of a field of novel lazy-grounding strategies for ASP evaluation,
  \item a formal investigation of how these grounding strategies compare to each other and to previously known ones, as well as
	\item an experimental analysis in terms of their effects on solving performance, showing that our approach is able to perform significantly better than state-of-the-art lazy grounding in many cases.
\end{itemize}

Outline:
After preliminaries in \Cref{sec:preliminaries}, novel lazy-grounding strategies are introduced in \Cref{sec:heuristics} and their relationships are formally investigated.
\Cref{sec:experiments} presents experimental results, and \Cref{sec:conclusion} concludes.

\section{Preliminaries}
\label{sec:preliminaries}

Let $\consts$ be a finite set of constants, $\variables$ be a set of variables and $\preds$ be a finite set of predicates.
A (classical) atom is an expression $p(t_1,\dots,t_n)$ where $p$ is an $n$-ary predicate and $t_1,\dots,t_n \in \consts \cup \variables$ are terms, and a literal is either an atom $a$ or its default negation $\naf{a}$.
An Answer-Set Program \prog\ is a finite set of (normal) rules of the form
\begin{equation}
  \label{eqRule}
	h \leftarrow b_1,~\ldots~,b_m,~\naf{b_{m+1}},~\ldots,~\naf{b_n}.
\end{equation}
where $h$ and $b_1,\dots,b_m$ are positive literals (i.e. atoms) and $\naf{b_{m+1}}$, $\ldots$, $\naf{b_n}$ are negative literals.
Given a rule $r$, we denote by $\head(r)=\{h\}$,
$\body(r) = \{ b_1,\dots,b_m,$ $not~b_{m+1},$
$\ldots,not~b_n \}$,
$\bodyp(r) = \{ b_1, \dots, b_m \}$, and
$\bodyn(r) = \{ b_{m+1},$ $\dots,$ $b_n \}$
the head, body, positive body, and negative body of $r$, respectively.
If $\head(r) = \emptyset$, $r$ is a called a constraint, and a fact if $\body(r) = \emptyset$.
Given a literal $l$, set of literals $L$, or rule $r$, we denote by $\vars(l)$, $\vars(L)$, or $\vars(r)$ the set of variables occurring in $l$, $L$, or $r$, respectively. 
A literal $l$ or rule $r$ is ground if $\vars(l) = \emptyset$ or $\vars(r) = \emptyset$, respectively.  The set of all ground atoms is denoted by $\gratoms$. A program $\prog$ is ground if all its rules $r \in \prog$ are. 
As usual, in the remainder of this work we only consider safe programs $\prog$, where each rule $r \in \prog$ is safe, i.e., each variable occurring in $r$ also occurs in its positive body, formally, $\vars(r) \subseteq\vars(\bodyp(r))$.
The function $\pred \colon 2^\atoms \to 2^\preds$ maps a set of atoms to their predicates, e.g. $\pred(\{ a(1,2), a(X,Y) \}) = \{ a/2 \}$.
The set $\heads(\prog) = \{ \head(r) \mid r \in \prog \}$ contains the heads of all rules in $\prog$.

An (Herbrand) interpretation $I$ is a subset of the Herbrand base w.r.t.~$P$,
i.e., $I \subseteq \gratoms$. An interpretation $I$ satisfies a literal $l$,
denoted $I \models l$, if $l \in I$ for positive $l$ and $l \notin I$ for
negative $l$. $I$ satisfies a ground rule $r$, denoted $I \models r$, if
$\bodyp(r) \subseteq I \land \bodyn(r) \cap I = \emptyset$ implies
$\head(r) \subseteq I$ and $\head(r) \neq \emptyset$. Given an interpretation
$I$ and a ground program $\prog$, the FLP-reduct $\prog^I$ of $P$ w.r.t.~$I$ is
the set of rules $r \in \prog$ whose body is satisfied by $I$, i.e.,
$\prog^I = \{ r \in \prog \mid \bodyp(r) \subseteq I \land \bodyn(r) \cap I =
\emptyset \}$. $I$ is an \emph{answer set} of a ground program $\prog$ if $I$
is the subset-minimal model of $P^I$.

A substitution $\sigma: \variables \to \consts$ is a mapping of variables to
constants. Given an atom $\mi{at}$ the result of applying a substitution
$\sigma$ to $\mi{at}$ is denoted by $\mi{at}\sigma$; this is extended in the
usual way to rules $r$, i.e., $r\sigma$ for a rule of the above form is
$\mi{h}\sigma \leftarrow \mi{b}_1\sigma, \ldots, b_m\sigma, \naf b_{m+1}\sigma, \naf
\mi{b}_n\sigma$. The grounding of a rule is given by
$\ground(r) = \{ r\sigma \mid \sigma \text{ is a substitution for all }v \in
\vars(r)\}$ and the grounding $\ground(\prog)$ of a program $\prog$ is given by
$\ground(\prog) = \bigcup_{r \in \prog} \ground(r)$.
The answer sets of a non-ground program $\prog$ are given
by the answer sets of $\ground(\prog)$.

Computing all answer sets such that $\ground(\prog)$ is constructed lazily is typically done by a loop composed of two phases: given a partial assignment (that is initially empty), first ground those rules that potentially fire under the current assignment, second expand the current assignment (using propagation and guessing). If the loop reaches a fixpoint, i.e., no more rules potentially fire and nothing is left to propagate or guess on, and no constraints are violated, then the current assignment is an answer set (cf. \cite{alpha_technical,Leutgeb.2017} for a detailed account of the \alphaslv ASP system).
A (partial) \emph{assignment} $\assignment$ is a set of signed atoms where $\assignmentp$ denotes the atoms assigned a positive value and $\assignmentm$ those assigned a negative value in $\assignment$.
Note that for this work it is sufficient to consider $\assignment$ to be Boolean (while the solving component of \alphaslv also considers a third and positive truth value must-be-true).
Given an assignment $\assignment$, a ground rule $r\sigma$ stemming from a non-ground rule $r \in \prog$ and a substitution $\sigma$, if $\bodyp(r\sigma) \subseteq \assignmentp$ holds then $r\sigma$ is \emph{of interest} w.r.t.~$\assignment$ and must be grounded, because $r\sigma$ potentially fires under $\assignment$.
Given two assignments $A, A'$ we define the combination $A \uplus A' = B$ to be an assignment such that $B^+ = A^+ \cup A'^+$ and $B^- = A^- \cup A'^-$.

\section{Lazy-Grounding Strategies}
\label{sec:heuristics}
Currently, a ground rule is only returned to the solver if it is of interest, i.e., if its positive body is fully satisfied.
This is a very restrictive grounding strategy in order to save space and avoid the grounding bottleneck.
As experience shows, this \emph{maximally strict} grounding strategy employed by \alphaslv results in non-optimal search performance, because state-of-the-art search procedures, derived from propositional SAT solving, only operate on grounded parts of the problem.
With maximally strict lazy-grounding these search procedures (most importantly branching heuristics) are left mostly blind, because a large part of the given problem instance simply has not been grounded yet.

In the following we thus investigate more permissive lazy-grounding strategies that lie between the maximally strict one and the full upfront grounding (the \emph{maximally permissive} grounding strategy).
The more permissive a grounding strategy, the less restrictions it poses on ground rules produced by the grounder. Thus, ground rules are produced earlier and in higher quantity, which allows search procedures to be more informed about the problem at hand.

\begin{definition}
  Let $\prog$ be an answer-set program, $\As$ be the set of assignments, $G_m = 2^{\gratoms}$ be the set of possible grounder memories, and $R \subseteq \prog$ the set of rules of $\prog$ that are not ground.
  Then, a \emph{lazy-grounding strategy} is a function $s: \As \times G_m \times R \to G_m \times 2^{\ground(P)}$ mapping a triple of assignment, grounder memory, and a rule with variables to a new grounder memory and a set of ground instances of the rule, i.e.,  $(\assignment, \gmem, r) \mapsto (\gmemprime, R')$ with $R' \subseteq \ground(r)$.
\end{definition}

Observe that a grounder memory $\gmem \subseteq 2^{\gratoms}$ is a subset of the Herbrand base $\HB_P = \gratoms$ and thus can be seen as one half of an assignment, i.e., either $\assignmentp$ or $\assignmentm$.
Since rules in ASP must be safe, a grounding substitution for all variables of the positive body of a rule is also a grounding substitution for the whole rule.
Therefore, it is sufficient to consider only the positive body for lazy grounding.

Considering both negative and positive body atoms could allow a more restrictive grounding than currently employed in \alphaslv, because a grounding instantiation could be rejected if one of the negative body literals is currently true. This approach, however, would require the solver to ground additional rules also when backtracking in the search, because backtracking removes assignments and those could then lead to negative body atoms no longer being true. Thus in \alphaslv grounding only considers the positive body of a rule and we follow this choice here.
In the remainder of this work we therefore identify a grounder memory $\gmem$ with an assignment $\assignment$ such that $\assignmentp = \gmem$ and $\assignmentm = \emptyset$, i.e., a grounder memory identifies a fully positive assignment.

In order to avoid ground instantiations of rules that can never be applicable we introduce a notion for deterministically inactive rules.
Intuitively, a rule is inactive if it contains a positive literal over a predicate that does not occur in any rule head (or fact) and hence cannot be derived, or if it contains a negative literal that also occurs as a fact in the program hence its negation never holds.
Formally, given a ground rule $r \in \ground(\prog)$, $r$ is \emph{inactive} if there exists $a \in \bodyp(r)$ with $\pred(a) \notin \pred(\heads(\prog))$ or $a \in \bodyn(r)$ with $a \in \atoms(\facts(\prog))$.\footnote{The notion of inactive rule could be generalized to cover more rules, but we decidedly chose a syntactic condition that is easy to check algorithmically.}

The formalization of \alphaslv's default grounding strategy is as follows.

\begin{definition}
  The \emph{default grounding strategy} for a program $\prog$ is a lazy-grounding strategy $\gsdefault(\assignment, \gmem, r) = (\gmemprime,R)$ such that $\gmemprime = \assignmentp$ and $R = \{ r^\prime \in \ground(r) \mid$ $r^\prime \text{ is }$ $\text{not inactive and of interest w.r.t.~} A\}$.
\end{definition}

The following notion helps to characterize a class of grounding strategies that are at least as permissive as the maximally strict strategy and strictly less permissive than the maximally permissive strategy.

\begin{definition}
\label{def:weakly_applicable}
A ground rule $r \in \ground(\prog)$ is \emph{weakly applicable} w.r.t. an assignment $\assignment$ if
$\bodyp(r) \cap \assignmentm = \emptyset$
and
$r$ is not inactive.
\end{definition}
Intuitively, a ground rule $r$ is weakly applicable if is not inactive and no positive body atom is assigned false.

Given an assignment $\assignment$, a (non-ground) rule $r$, and a substitution $\sigma$ such that $r\sigma$ is ground, we call the set $L_\assignment$ of positive literals of $r$ whose grounding is in $\assignment$, i.e., $L_\assignment = \{ l \in \bodyp(r) \mid l\sigma \in \assignmentp\}$, the \emph{assigned literals} of $r\sigma$ w.r.t.~$\assignment$; furthermore, if $\vars(L_\assignment) = \vars(r)$ we say $r\sigma$ is \emph{all-variable-assigning} w.r.t.~$\assignment$.

\begin{definition}
\label{def:n_unassigned}
A ground instance $r\sigma$ of a non-ground rule $r \in \prog$ is \emph{$k$-unassigned} w.r.t. an assignment $\assignment$ if it is weakly applicable, its set $L_\assignment$ of assigned literals is all-variable-assigning, and $| \bodyp(r) \setminus L_\assignment | \leq k$, i.e. at most $k$ literals in the positive body of $r\sigma$ are still unassigned.
\end{definition}

For grounding strategies based on $k$-unassignedness, we further distinguish between constraints and normal rules, because these two types affect the search procedure in different ways (as \Cref{sec:experiments} shows).
A modified grounder then returns all ground rules that can be produced w.r.t. the current partial assignment and that are $k_{co}$-unassigned in the case of constraints or $k_{ru}$-unassigned in the case of other rules, where $k_{co}$ and $k_{ru}$ are parameterizable.
Values $k_{co} = k_{ru} = 0$ yield the maximally strict grounding strategy, i.e., a rule is $0$-unassigned if and only if it is of interest.
The field of novel grounding strategies then is as follows.
\begin{definition}
  The \emph{$k$-unassigned grounding strategy} for a program $\prog$ is a lazy-grounding strategy $\gskunassigned_{k_{co},k_{ru}}(\assignment, \gmem, r) = (\gmemprime,R)$ such that $\gmemprime = \assignmentp$ and $R = \{ r^\prime \in \ground(r) \mid \head(r^\prime) = \emptyset, r^\prime \text{ is $k_{co}$-unassigned w.r.t.~} A \} \cup \{  r^\prime \in \ground(r) \mid \head(r^\prime) \neq \emptyset, r^\prime$ is $k_{ru}$-unassigned w.r.t.~$A\}$.
\end{definition}

Strategies with $k_{co} > k_{ru}$ ground more constraints than rules,
allowing better-informed search heuristics and at the same time fewer superfluous ground rules.
Intuitively, these grounding strategies yield a larger grounding in each step of a lazy-grounding solver, but they are still limited to only yield ground instances of rules that are very close to the current search path, since $k$\hyph{}unassignedness requires all variables to be bound by instances in the current assignment.

To give the grounder more freedom such that ground instances can be obtained that are further away from the current search path, we introduce accumulator grounding strategies. The core idea is to use the grounder memory to store ground atoms that were encountered earlier in another branch of the search for answer sets but are not necessarily true in the current branch of the search.

\begin{definition}
  The \emph{default accumulator grounding strategy} for a program $\prog$ is a lazy-grounding strategy $\gsdefaultaccu(\assignment, \gmem, r) = (\gmemprime,R)$ such that $\gmemprime = \gmem \cup \assignmentp$ and $R = \{ r^\prime \in \ground(r) \mid r^\prime \text{ is not inactive and of interest w.r.t.~} \gmemprime \uplus \assignment\}$.
\end{definition}

Using such an accumulator the grounder is able to obtain ground instances resulting from a combination of different search paths.
The accumulator can also be added to the $k$-unassigned grounding strategy as follows.

\begin{definition}
  The \emph{$k$-unassigned accumulator grounding strategy} for a program $\prog$ is a lazy-grounding strategy $\gskunassignedaccu_{k_{co},k_{ru}}(\assignment, \gmem, r) = (\gmemprime,R)$ such that $\gmemprime = \gmem \cup \assignmentp$ and $R = \{ r \in \ground(r) \mid \head(r) = \emptyset, r \text{ is $k_{co}$-unassigned w.r.t.~} \gmemprime \uplus \assignment\} \cup \{  r^\prime \in \ground(r) \mid \head(r^\prime) \neq \emptyset, r^\prime \text{ is $k_{ru}$-unassigned w.r.t.~} \gmemprime \uplus \assignment \}$.
\end{definition}

\paragraph{Relationships Between Lazy-Grounding Strategies.}
Some of the lazy-grounding strategies introduced above are subsumed by others, i.e., the sets of ground rules produced by some grounding strategies are subsets of those produced by others.
First, each $k$-unassigned grounding strategy is subsumed by a $k+1$-unassigned grounding strategy, intuitively because a $k$-unassigned rule also is a $k+1$-unassigned rule. Formally, and more detailed:
\begin{proposition}
  \label{prop:gskunassigned_subsumes_greaterk}
  Given an assignment $\assignment$, a grounding memory $\gmem$, and a rule $r$.
  Let $\gskunassigned_{k_{co},k_{ru}}(\assignment, \gmem, r) = (\gmemprime,R)$ and $\gskunassigned_{k'_c,k'_r}(\assignment, \gmem, r) = (\gmemdprime,R')$, then $R \subseteq R'$ for any $k_{co}' \geq k_{co}$ and $k_{ru}' \geq k_{ru}$.
\end{proposition}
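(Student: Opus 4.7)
The plan is to unfold the definitions and argue pointwise. Fix an arbitrary $r' \in R$; I want to show $r' \in R'$. By the definition of $\gskunassigned_{k_{co},k_{ru}}$, the set $R$ splits as a union of two parts according to whether $\head(r') = \emptyset$, so I will split the argument into the constraint case and the normal-rule case. In the constraint case $r'$ is $k_{co}$-unassigned w.r.t.~$\assignment$; in the normal-rule case it is $k_{ru}$-unassigned w.r.t.~$\assignment$. In each case I need to show the same rule satisfies the analogous condition with the primed parameters, so that it lands in the corresponding part of $R'$.

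To do this, I would invoke \Cref{def:n_unassigned} and observe that $k$-unassignedness decomposes into three conditions: (i) weak applicability w.r.t.~$\assignment$, (ii) the set $L_\assignment$ of assigned literals is all-variable-assigning, and (iii) $|\bodyp(r') \setminus L_\assignment| \leq k$. Conditions (i) and (ii) do not mention $k$ at all, so they are preserved verbatim when passing to $k_{co}'$ or $k_{ru}'$. Condition (iii) is monotone in $k$: if $|\bodyp(r') \setminus L_\assignment| \leq k_{co}$ and $k_{co}' \geq k_{co}$, then $|\bodyp(r') \setminus L_\assignment| \leq k_{co}'$, and likewise for $k_{ru}$ and $k_{ru}'$. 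Therefore $r'$ is $k_{co}'$-unassigned (resp.~$k_{ru}'$-unassigned), placing it in $R'$.

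Since the grounder-memory components of the outputs are both simply $\assignmentp$ and the rule $r$ and triple $(\assignment, \gmem, r)$ are the same input in both invocations, the only thing that needs checking is this inclusion of the rule-sets, which is precisely what the two paragraphs above establish. The proof is thus a direct definition-chase with no real obstacle; the only thing worth being careful about is to treat the two disjuncts of the defining union of $R$ separately, since the bounds $k_{co}$ and $k_{ru}$ (and their primed counterparts) may be different, and one must not conflate them. The argument generalises verbatim to the accumulator variant $\gskunassignedaccu$, since the only change there is to replace $\assignment$ by $\gmemprime \uplus \assignment$, which again does not interact with $k$.
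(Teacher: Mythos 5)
Your proposal is correct and follows essentially the same route as the paper's own proof: split on whether the ground instance is a constraint or a normal rule, then use monotonicity of $k$-unassignedness in $k$. You merely make explicit the step the paper leaves implicit, namely that only the cardinality bound $|\bodyp(r')\setminus L_\assignment| \leq k$ in \Cref{def:n_unassigned} depends on $k$ and is monotone, while weak applicability and the all-variable-assigning condition are unaffected.
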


\begin{proof}
  Let $\gskunassigned_{k_{co},k_{ru}}(\assignment, \gmem, r) = (\gmemprime,R)$ and $r' \in R$. Then $r'$ is either a $k_{co}$-unassigned constraint or a $k_{ru}$-unassigned rule and because $k_{co}' \geq k_{co}$ and $k_{ru}' \geq k_{ru}$ it follows that $r'$ is either a $k_{co}'$-unassigned constraint or a $k_{ru}'$-unassigned rule, respectively. In either case it holds that $r' \in R'$ for $\gskunassigned_{k'_c,k'_r}(\assignment, \gmem, r) = (\gmemdprime,R')$.
\end{proof}

Second, each $k$-unassigned strategy subsumes the default grounding strategy.
\begin{proposition}
  \label{prop:gskunassigned_subsumes_gsdefault}
  Given an assignment $\assignment$, a grounding memory $\gmem$, and a rule $r$.
  Let $\gsdefault(\assignment, \gmem, r) = (\gmemprime,R)$ and $\gskunassigned_{k_{co},k_{ru}}(\assignment, \gmem, r) = (\gmemdprime,R')$, then $R \subseteq R'$ for any $k_{co},k_{ru} \geq 0$.
\end{proposition}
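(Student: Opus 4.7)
The plan is to fix an arbitrary $r' \in R$ where $\gsdefault(\assignment, \gmem, r) = (\gmemprime, R)$, and show $r' \in R'$ where $\gskunassigned_{k_{co},k_{ru}}(\assignment, \gmem, r) = (\gmemdprime, R')$. By definition of $\gsdefault$, $r'$ is a ground instance of $r$ that is not inactive and of interest w.r.t.\ $\assignment$, meaning $\bodyp(r') \subseteq \assignmentp$.

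From $\bodyp(r') \subseteq \assignmentp$ I would derive each requirement of $k$-unassignedness (Definition~\ref{def:n_unassigned}) in turn. First, weak applicability: $r'$ is not inactive by assumption, and since assignments are consistent (i.e.\ $\assignmentp \cap \assignmentm = \emptyset$), $\bodyp(r') \subseteq \assignmentp$ gives $\bodyp(r') \cap \assignmentm = \emptyset$. Second, the set of assigned literals is $L_\assignment = \{ l \in \bodyp(r) \mid l\sigma \in \assignmentp\} = \bodyp(r)$, since every positive body literal of $r'$ lies in $\assignmentp$. By the safety assumption on $\prog$, $\vars(r) \subseteq \vars(\bodyp(r)) = \vars(L_\assignment)$, so $r'$ is all-variable-assigning. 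Third, $|\bodyp(r) \setminus L_\assignment| = 0$, which is $\leq k$ for any $k \geq 0$.

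Hence $r'$ is $0$-unassigned w.r.t.~$\assignment$, and in particular it is both $k_{co}$-unassigned and $k_{ru}$-unassigned for any $k_{co},k_{ru} \geq 0$. Whether $\head(r') = \emptyset$ or $\head(r') \neq \emptyset$, $r'$ falls into one of the two set-builder clauses defining $R'$, so $r' \in R'$.

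There is no real obstacle here: the argument is a direct unfolding of the definitions, with the only subtle ingredient being the use of safety to pass from ``all positive body literals are assigned'' to ``all variables of $r$ are bound,'' which is exactly what makes $L_\assignment$ all-variable-assigning. One could additionally remark that this proof is easily combined with Proposition~\ref{prop:gskunassigned_subsumes_greaterk} to obtain a single chain $\gsdefault \subseteq \gskunassigned_{0,0} \subseteq \gskunassigned_{k_{co},k_{ru}}$, but invoking that result is not needed since the direct argument already handles arbitrary $k_{co},k_{ru} \geq 0$.
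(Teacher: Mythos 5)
Your proof is correct and follows essentially the same route as the paper's: show that a rule that is of interest (and not inactive) is $0$-unassigned, then conclude membership in $R'$ for arbitrary $k_{co},k_{ru}\geq 0$. You simply fill in the details the paper leaves implicit (consistency of $\assignment$ for weak applicability, safety for the all-variable-assigning condition) and observe $0\leq k$ directly instead of citing Proposition~\ref{prop:gskunassigned_subsumes_greaterk}, which is an inessential difference.
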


\begin{proof}
  Let $\gsdefault(\assignment, \gmem, r) = (\gmemprime,R)$ and $r \in R$, then $r$ is not inactive and of interest w.r.t.~$\assignment$, i.e., $\bodyp(r) \subseteq \assignmentp$.
  By the latter, it holds that $r$ is $0$-unassigned and consequently $r \in R'$ for $\gskunassigned_{0,0}(\assignment, \gmem, r) = (\gmemdprime,R')$.
  From Proposition \ref{prop:gskunassigned_subsumes_greaterk} it then follows that $r \in R'$ for any $\gskunassigned_{k_{co},k_{ru}}(\assignment, \gmem, r) = (\gmemdprime,R')$ with $k_{co},k_{ru} \geq 0$.
\end{proof}

Third, the accumulator variant of a grounding strategy subsumes the grounding strategy without accumulator.
\begin{proposition}
  \label{prop:gsaccu_subsumes_gs}
  For an assignment $\assignment$, a grounding memory $\gmem$, and a rule $r$:
  \begin{enumerate}
  \item if $\gsdefault(\assignment, \gmem, r) = (\gmemprime,R)$ and $\gsdefaultaccu(\assignment, \gmem, r) = (\gmemdprime,R')$ then $R \subseteq R'$.
  \item if $\gskunassigned_{k_{co},k_{ru}}(\assignment, \gmem, r) = (\gmemprime,R)$ and $\gskunassignedaccu_{k_{co},k_{ru}}(\assignment, \gmem, r) = (\gmemdprime,R')$ then $R \subseteq R'$ for any $k_{co}, k_{ru} \geq 0$.
  \item if $\gskunassignedaccu_{k_{co},k_{ru}}(\assignment, \gmem, r) = (\gmemprime,R)$ and $\gskunassignedaccu_{k'_{co},k'_{ru}}(\assignment, \gmem, r) = (\gmemdprime,R')$ then $R \subseteq R'$ for any $k_{co}' \geq k_{co}$ and $k_{ru}' \geq k_{ru}$.
  \end{enumerate}
\end{proposition}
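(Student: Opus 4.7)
The plan is to handle each of the three parts separately, observing that each one boils down to tracking how the ``positive view'' available to the grounder grows as one moves from a more restrictive to a more permissive strategy. In every part I will use the key algebraic fact that, for $\gmemprime = \gmem \cup \assignmentp$, the combined assignment $B := \gmemprime \uplus \assignment$ satisfies $B^+ = \gmem \cup \assignmentp \supseteq \assignmentp$ and $B^- = \assignmentm$; so the negative side is unchanged while the positive side can only grow.

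For part~1, I would take any $r' \in R$ and unpack definitions. Since $r'$ is not inactive and of interest w.r.t.\ $\assignment$, we have $\bodyp(r') \subseteq \assignmentp$. Because $\assignmentp \subseteq B^+$, the same inclusion yields $\bodyp(r') \subseteq B^+$, so $r'$ is still of interest, now w.r.t.\ $B$. Inactivity does not depend on the assignment, so the inactivity condition transfers unchanged. Therefore $r' \in R'$.

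For part~2, I would fix $r' \in R$ and separately verify the three ingredients of $k$-unassignedness relative to $B$. Weak applicability concerns only $\assignmentm$ (through $\bodyp(r') \cap \assignmentm = \emptyset$) and inactivity, both of which are identical for $\assignment$ and $B$, so weak applicability transfers immediately. For the remaining conditions the key observation is that the set of assigned literals is monotone in the positive assignment: since $B^+ \supseteq \assignmentp$, we have $L_\assignment \subseteq L_B$. This monotonicity gives $\vars(L_B) \supseteq \vars(L_\assignment) = \vars(r)$, so the all-variable-assigning property is preserved, and $|\bodyp(r') \setminus L_B| \leq |\bodyp(r') \setminus L_\assignment| \leq k$, so the cardinality bound is preserved. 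The constraint-versus-rule distinction is a syntactic property of $r'$ and so is preserved too, letting the argument go through with the same $k_{co}$ for constraints and $k_{ru}$ for non-constraints. Hence $r' \in R'$.

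For part~3 I would reduce to Proposition~\ref{prop:gskunassigned_subsumes_greaterk}: both strategies compute the same $B = \gmemprime \uplus \assignment$, so $k$-unassignedness is evaluated with respect to the same combined assignment in both cases, and a ($k_{co},k_{ru}$)-unassigned rule is automatically $(k_{co}',k_{ru}')$-unassigned whenever $k_{co}' \geq k_{co}$ and $k_{ru}' \geq k_{ru}$, exactly as in the proof of that earlier proposition.

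There is no real obstacle here; the argument is almost entirely definition-unpacking. The one place that deserves care is part~2: one has to check that \emph{enlarging} the positive side of the assignment cannot destroy the all-variable-assigning condition nor increase the number of unassigned body literals. Once the monotonicity $L_\assignment \subseteq L_B$ is made explicit, both facts follow directly and parts~1 and~3 become essentially corollaries of the same observation.
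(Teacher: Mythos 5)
Your proof is correct and follows essentially the same route as the paper: identify the combined assignment's positive part as a superset of $\assignmentp$ with unchanged negative part, and observe that being of interest (resp.\ $k$-unassigned) is monotone under enlarging the positive side. The paper proves only part~1 explicitly and dismisses parts~2 and~3 as ``analogous,'' so your explicit verification of the monotonicity of $L_\assignment$, the all-variable-assigning condition, and the cardinality bound in part~2 is a welcome elaboration rather than a deviation.
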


\begin{proof}
  1.~Let $\gsdefault(\assignment, \gmem, r) = (\gmemprime,R)$ and $r \in R$, thus by definition it holds that $r$ is not inactive and of interest w.r.t.~$\gmemprime = \assignmentp$. For the accumulator variant it holds that $\gmemdprime = \gmem \cup \assignmentp$ and $R' = \{ r \in \ground(r) \mid r \text{ is not inactive and}$ $\text{of interest w.r.t.~} \gmemdprime\}$. Since $\gmemprime \subseteq \gmemdprime$ and the assignment corresponding to a grounder memory is an assignment $\assignment$ such that $\assignmentp = \gmemprime$ and $\assignmentm = \emptyset$, it holds that $r$ is of interest w.r.t.~$\gmemdprime$, i.e., $r \in R'$.
  2.~and 3.~are analogous.
\end{proof}

\paragraph{Soundness and Completeness.} 
We show in the following that all grounding strategies are sound and complete, i.e., in a lazy-grounding ASP solver one may freely exchange one grounding strategy for another.

\begin{proposition} Given a lazy-grounding ASP solver $S$ which is sound and complete for the default grounding strategy $\gsdefault$, then $S$ is sound and complete for the $k$-unassigned grounding strategies $\gskunassigned_{k_{co},k_{ru}}$, and their respective accumulator variants $\gsdefaultaccu$ and $\gskunassignedaccu_{k_{co},k_{ru}}$.
\end{proposition}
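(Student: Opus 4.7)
The plan is to reduce the claim to the subsumption results of Propositions \ref{prop:gskunassigned_subsumes_greaterk}--\ref{prop:gsaccu_subsumes_gs}. These establish that, on every input $(\assignment, \gmem, r)$, each of the more permissive strategies $\gskunassigned_{k_{co},k_{ru}}$, $\gsdefaultaccu$, and $\gskunassignedaccu_{k_{co},k_{ru}}$ returns a superset of what $\gsdefault$ returns; moreover, by construction every additional rule is a non-inactive ground instance, hence an element of $\ground(\prog)$. So the solver $S$ is handed only legitimate ground rules from $\ground(\prog)$, merely more of them than under $\gsdefault$.

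The correctness transfer then rests on the invariant, implicit in $S$'s soundness and completeness under $\gsdefault$, that at a fixpoint of the grounding-solving loop with answer-set candidate $I$, every rule of $\ground(\prog)$ that is of interest w.r.t.~$I$ has already been grounded. This invariant carries over to any more permissive strategy $s$ because $s$ grounds at least what $\gsdefault$ does. The rules additionally grounded by $s$ are harmless at any fixpoint: for the $k$-unassigned strategies an extra rule may have had an unassigned body literal at the time of grounding, but at the fixpoint every atom is assigned, so either the body becomes fully satisfied (and then the rule is of interest under $I$ and would have been produced by $\gsdefault$ as well) or some positive body literal is false (and then the rule is trivially satisfied and places no constraint on $I$). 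For the accumulator variants, rules generated from atoms in $\gmem \setminus \assignmentp$ are likewise either of interest at the fixpoint or trivially satisfied, since their bodies reference only ground atoms whose truth value under $I$ falls into one of these two cases. Therefore the fixpoint grounded program under $s$ agrees with that under $\gsdefault$ on all rules that actually constrain $I$, so the same candidates are accepted and rejected, and the claim follows.

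The principal obstacle is that the proposition provides no formal operational model of $S$, forcing the argument to rely on the fixpoint invariant above rather than on direct inspection of $S$. A fully rigorous proof would spell this invariant out from the lazy-grounding loop described in \cite{alpha_technical}, and in particular verify that the accumulator interacts correctly with backtracking --- i.e., that atoms retained in $\gmem$ after backtracking still only trigger production of further rules from $\ground(\prog)$, never spurious ones outside of it. Beyond that, the reasoning reduces to a straightforward monotonicity observation: $S$'s correctness depends only on the set of ground rules it has seen, and this set is enlarged by $s$ only with rules that $\ground(\prog)$ already contains.
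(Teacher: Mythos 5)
Your proposal is correct and follows essentially the same route as the paper: soundness because every strategy only ever emits rules from $\ground(\prog)$, and completeness because, by the subsumption propositions, every other strategy returns a superset of what $\gsdefault$ returns on each call. Your additional discussion of why the extra, earlier-grounded rules are harmless at a fixpoint (either of interest or trivially satisfied) makes explicit a point the paper leaves implicit, but it does not change the structure of the argument.
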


\begin{proof}
  Soundness immediately follows from the respective definition, because every ground rule returned by any of the above grounding strategies is a ground rule of the original program. Formally, let $(\gmemprime,R)$ be the returned pair of any of these strategies then for all $r \in R$ holds that $r \in \ground(r)$ and thus $r \in \ground(P)$ where $P$ is the input program.

  Completeness: $S$ is complete for $\gsdefault$, intuitively if a ground rule $r$ fires under some assignment $\assignment$ then $r$ is of interest w.r.t.~$\assignment$ and hence returned by $\gsdefault$.
  Observe that a rule that is inactive can never be applicable in any answer set, hence the additional requirement to only consider rules that are not inactive has no effect on completeness.
  Completeness for all other grounding strategies then follows from \cref{prop:gskunassigned_subsumes_greaterk,prop:gskunassigned_subsumes_gsdefault,prop:gsaccu_subsumes_gs}, showing that every other grounding strategy produces at least the same ground rules as $\gsdefault$.
\end{proof}

  The lazy-grounding strategies $\gsdefault$,$\gskunassigned_{k_{co},k_{ru}}$, $\gsdefaultaccu$, and $\gskunassignedaccu_{k_{co},k_{ru}}$ are sound and complete for \alphaslv, since \alphaslv is sound and complete for $\gsdefault$ (cf.~\cite{alpha_technical}).

\paragraph{The Effect of Domain Predicates.}
It is well-known for practical ASP solving that the choice of encoding employed for a task can have a major influence on solving performance, even though the semantics is still declarative.
Such an effect can also be observed in conjunction with grounding strategies based on $k$-unassignedness.
Assume that $\mi{dom}$ is a domain predicate in the sense of \cite{DBLP:conf/lpnmr/Syrjanen01}, i.e. a predicate defining the domain over which $p$ and $q$ are defined, and consider the constraint $c$ as follows: $\leftarrow p(X), q(Y).$
If $p(1) \in \assignmentp$ and $q(t) \notin \assignmentp$ holds for all terms $t$ then $c$ is not $1$-unassigned, because $Y$ is not yet bound and thus $c$ is not all-variable-assigning.
Extending $c$ with domain predicates to obtain $c'$ gives $\leftarrow \mi{dom}(X), \mi{dom}(Y), p(X), q(Y).$ Assuming that $\mi{dom}(1)$ holds together with $p(1) \in \assignmentp$ and $q(1) \notin \assignmentp$ then yields the ground rule $\leftarrow \mi{dom}(1), \mi{dom}(1), p(1), q(1)$ which is $1$-unassigned w.r.t.~$\assignmentp$.
In such a case, the $1$-unassigned lazy-grounding strategy yields no ground instances for $c$ but some for $c'$.
Hence an earlier grounding of constraints (and rules) can be initiated by adding (superfluous) domain predicates.

Adding domain predicates allows finding a solution with fewer backtracks, because the additional ground constraints support early propagation and inform the search heuristics better.
This is not a guaranteed improvement, however, since more ground constraints also need more space.
A grounder can add domain predicates automatically
or use the heads of previously grounded rules to generate bindings
even if those heads are not true yet.
But this is future work.

\section{Experimental Results}
\label{sec:experiments}

To evaluate the novel grounding strategies an experimental study was carried out using two benchmark problems, Graph Colouring and House Reconfiguration.

\paragraph{Experimental Setup.}
Experiments were run on a cluster of machines each with two
Intel\textsuperscript{\textregistered} Xeon\textsuperscript{\textregistered} CPU E5-2650 v4 @ 2.20GHz with 12 cores each, 252 GB of memory, and Ubuntu 16.04.1 LTS Linux.
Benchmarks were scheduled with the ABC Benchmarking System\footnote{\url{https://github.com/credl/abcbenchmarking}} \cite{DBLP:conf/aiia/Redl16} together with HTCondor\textsuperscript{\texttrademark}.\footnote{\url{http://research.cs.wisc.edu/htcondor}}
Time and memory consumption were measured by \slv{pyrunlim},\footnote{\url{https://alviano.com/software/pyrunlim/}}
which was also used to limit time consumption to 15 minutes per instance and swapping to 0.

\paragraph{Encodings and Instances.}
The encoding for Graph Colouring was taken from the Fourth Answer Set Programming Competition \cite{Alviano.2013},
with a choice rule replacing the original disjunctive rule without altering semantics of the problem.
The encoding for the House Reconfiguration Problem was taken from \cite{Ryabokon.2015}, but changed to a decision problem, since optimization is not yet supported by \alphaslv.

Problem instances from the ASP Competitions \cite{Alviano.2013,Calimeri.2016} decidedly were not used, because these are hand picked to exercise search techniques of ground-and-solve systems, some of which are not (yet) available in lazy-grounding ASP solving, like restarts and equivalence preprocessing.\footnote{Graph Colouring benchmark instances, for example, are prohibitive even for \clasp\ with those techniques disabled by \texttt{--sat-prepro=no --eq=0 -r no -d no}.}

For Graph Colouring, Erdős–Rényi graphs \cite{Erdos.1959} were generated\footnote{Using the Python function \texttt{networkx.generators.random\_graphs.gnm\_random\_graph}.}.
Let $(V,E,C)$ denote a class of Graph Colouring instances, where $V$ denotes the number of vertices, $E$ the number of edges, and $C$ the number of colours.
For each configuration in the set $\{ (V,E,C) \mid V \in \{ 10, 20, \dots, 190, 200, 250, \dots, 450, 500 \},$ $\frac{E}{V} \in \{ 4, 8, 16 \}, C \in \{ 3, 5 \} \} \setminus (\{ (V,E,5) \mid V \geq 180, \frac{E}{V} = 16 \} \cup \{ (V,E,5) \mid V \geq 100, \frac{E}{V} = 8 \})$,
11 graphs were generated.
This makes 1430 instances in total.
The values for $E$ and $C$ were chosen to obtain a diverse set of instances based on values used for the ASP competitions.

Instances for the House Reconfiguration Problem were also generated randomly.
For each number of things $T \in \{ 5, 10, \dots, 40, 45 \}$, 11 instances were generated.
This makes 99 instances in total.
For each instance, the number of persons $P$ was drawn from a uniform distribution $\mathcal{U}\{2, \lfloor \frac{T}{2} \rfloor + 1\}$ and the owner of every thing was drawn from $\mathcal{U}\{1, P\}$.
Every thing had a $50\%$ chance to be in a cabinet, which was then drawn from a uniform distribution.
A random subset of given things was considered as long things, the cardinality $T_{long}$ of which was drawn from a normal distribution and then fit into the available range $\{0, \dots, T\}$ by computing $T_{long} = \mi{min}(T,|n|)$ after drawing $n$ from $\mathcal{N}(0,T^2)$.

\paragraph{Results and Discussion.}
Using the method described above, we obtained data on \alphaslv's resource consumption for processing the benchmarks.
For each instance, \alphaslv\ was instructed to find 10 answer sets.
To reduce the numbers of data points in the scatter plots in this section, we do not show every problem instance, but the median performance data for each size and class of problem instances.\footnote{Computing the median of an odd number of performance data allows to obtain a measure of central tendency that is unaffected by timeouts.}
For example, all 11 Graph Colouring instances of each class $(V,E,C)$ are condensed into one data point for each pair of grounding strategies compared by the plot.

\begin{figure}[t]
\centering
\begin{subfigure}{.48\textwidth}
  \centering
  \includegraphics[width=\linewidth]{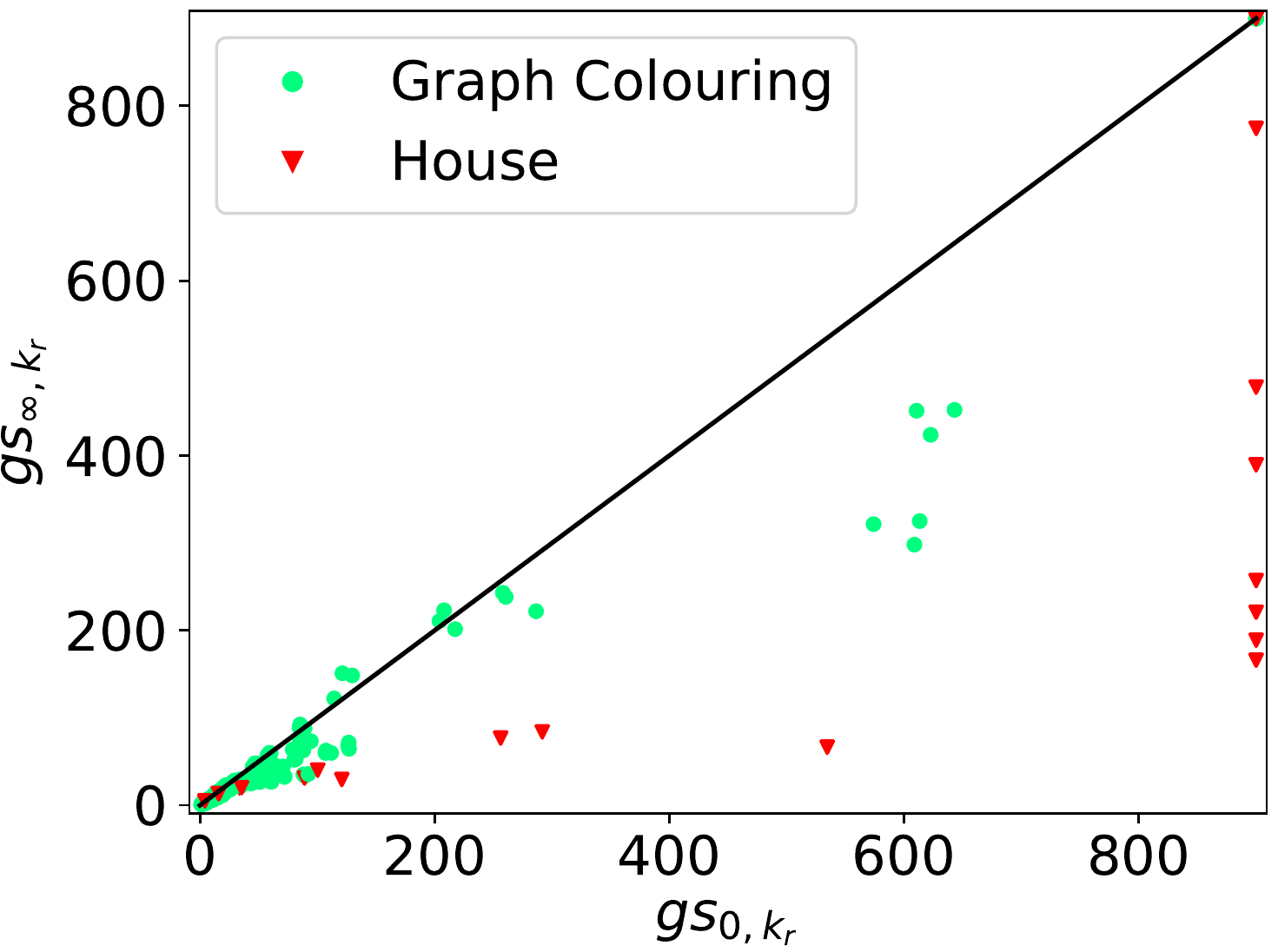}
  \caption{Time consumption (s)}
  \label{fig:scatter_constraints_l_vs_s_all_problems_time}
\end{subfigure}%
\hspace*{\fill}
\begin{subfigure}{.48\textwidth}
  \centering
  \includegraphics[width=\linewidth]{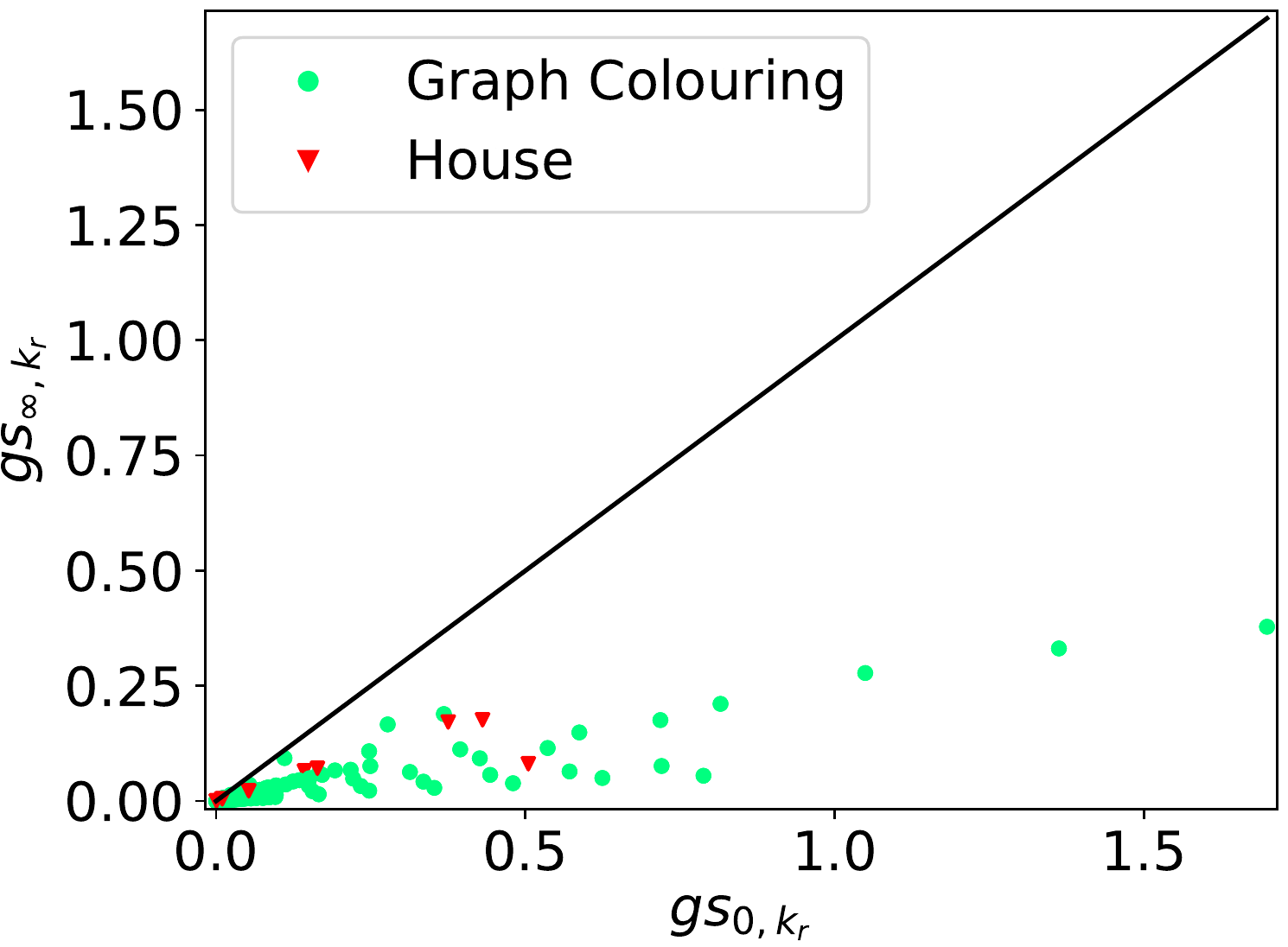}
  \caption{Guesses times $10^6$}
  \label{fig:scatter_constraints_l_vs_s_all_problems_g}
\end{subfigure}
\caption{Time and guesses comparing $\gskunassigned_{0,k_{ru}}$ to $\gskunassigned_{\infty,k_{ru}}$ for $k_{ru} \in \{ 0, 1, \infty \}$}
\label{fig:scatter_constraints_l_vs_s_all_problems_time_g}
\end{figure}

\cref{fig:scatter_constraints_l_vs_s_all_problems_time_g} shows the resource usage needed to find the first 10 answer sets of each benchmark instance, comparing strict ($k_{co} = 0$) to permissive ($k_{co} = \infty$) grounding of constraints.
Each data point in the scatter plots corresponds to one class of problem instances of the same size solved by two different grounder configurations for $k_{co} \in \{ 0, \infty \}$ and varying $k_{ru} \in \{ 0, 1, \infty \}$.
The location of each data point on the $x$ axis corresponds to resource usage with $k_{co} = 0$, its $y$ location to resource usage with $k_{co} = \infty$.
Hence, a data point on the diagonal corresponds to a problem instance where strict ($k_{co} = 0$) and permissive ($k_{co} = \infty$) perform equally well.
Data points that are located below the diagonal indicate that an instance could be solved faster when using $k_{co} = \infty$, while those above the diagonal represent an instance that could be solved faster when using $k_{co} = 0$.
Instances that exceeded the given time-out of 900 seconds line up at the end of each axis.
Time usage is shown in \cref{fig:scatter_constraints_l_vs_s_all_problems_time}, number of guesses in \cref{fig:scatter_constraints_l_vs_s_all_problems_g}.\footnote{Numbers of guesses are only shown for instances that could be solved within the given time limit.}

Since most data points in \cref{fig:scatter_constraints_l_vs_s_all_problems_time_g} are located below the diagonal, it is evident that permissive grounding of constraints led to faster solving in most cases of Graph Colouring and all cases of House Configuration.
Comparing \cref{fig:scatter_constraints_l_vs_s_all_problems_time,fig:scatter_constraints_l_vs_s_all_problems_g} shows an even greater advantage of permissive grounding
when the number of guesses is considered.

When strict and permissive settings for $k_{ru}$ instead of $k_{co}$ are compared, no clear conclusion can be drawn which value yields the best performance.
Due to space constraints, the corresponding plots are not shown.

\begin{figure}[t]
\centering
\begin{subfigure}{.48\textwidth}
  \centering
  \includegraphics[width=\linewidth]{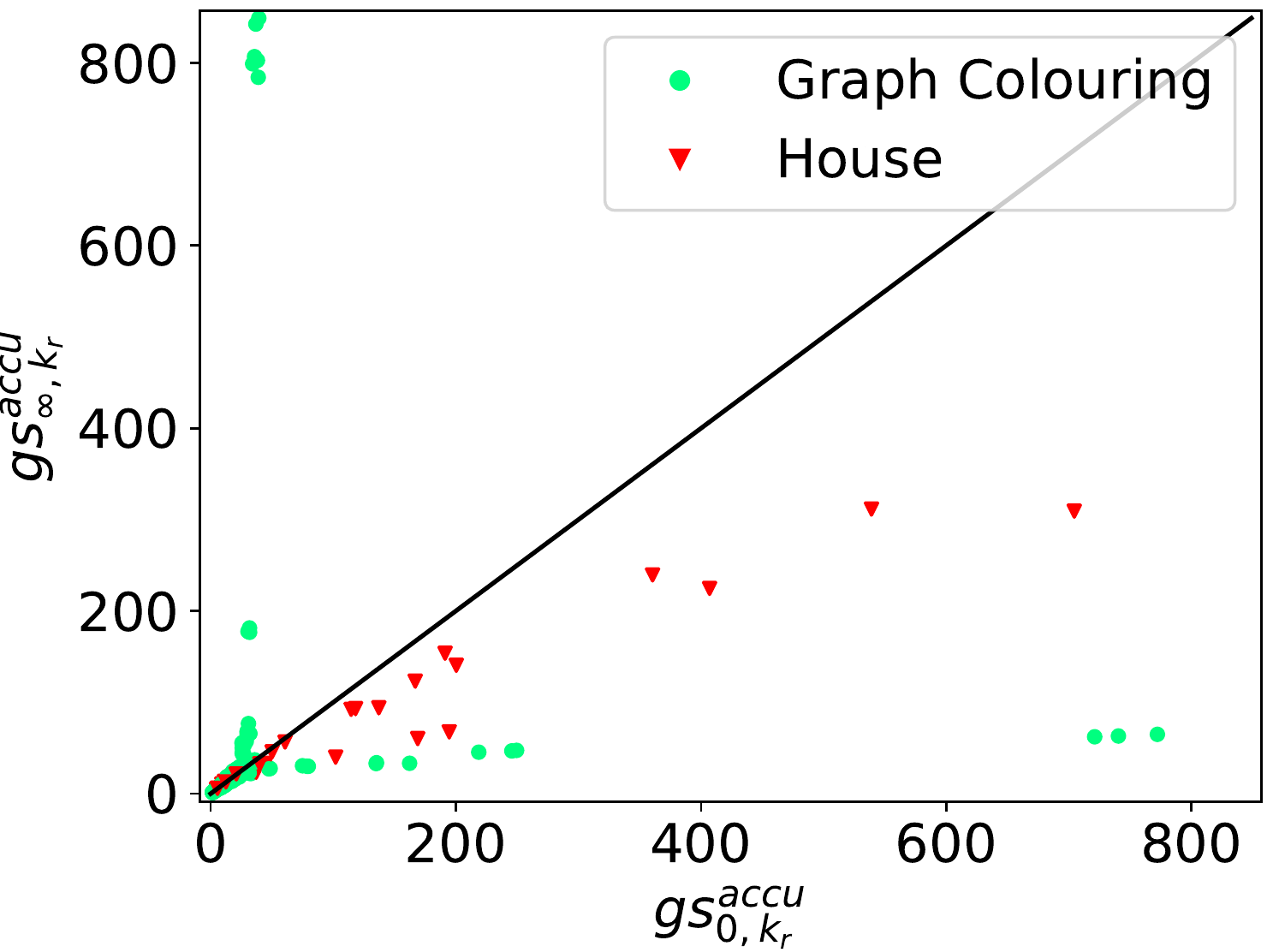}
  \caption{Time consumption (s)}
  \label{fig:scatter_constraints_l_vs_s_all_problems_time_accum}
\end{subfigure}%
\hspace*{\fill}
\begin{subfigure}{.48\textwidth}
  \centering
  \includegraphics[width=\linewidth]{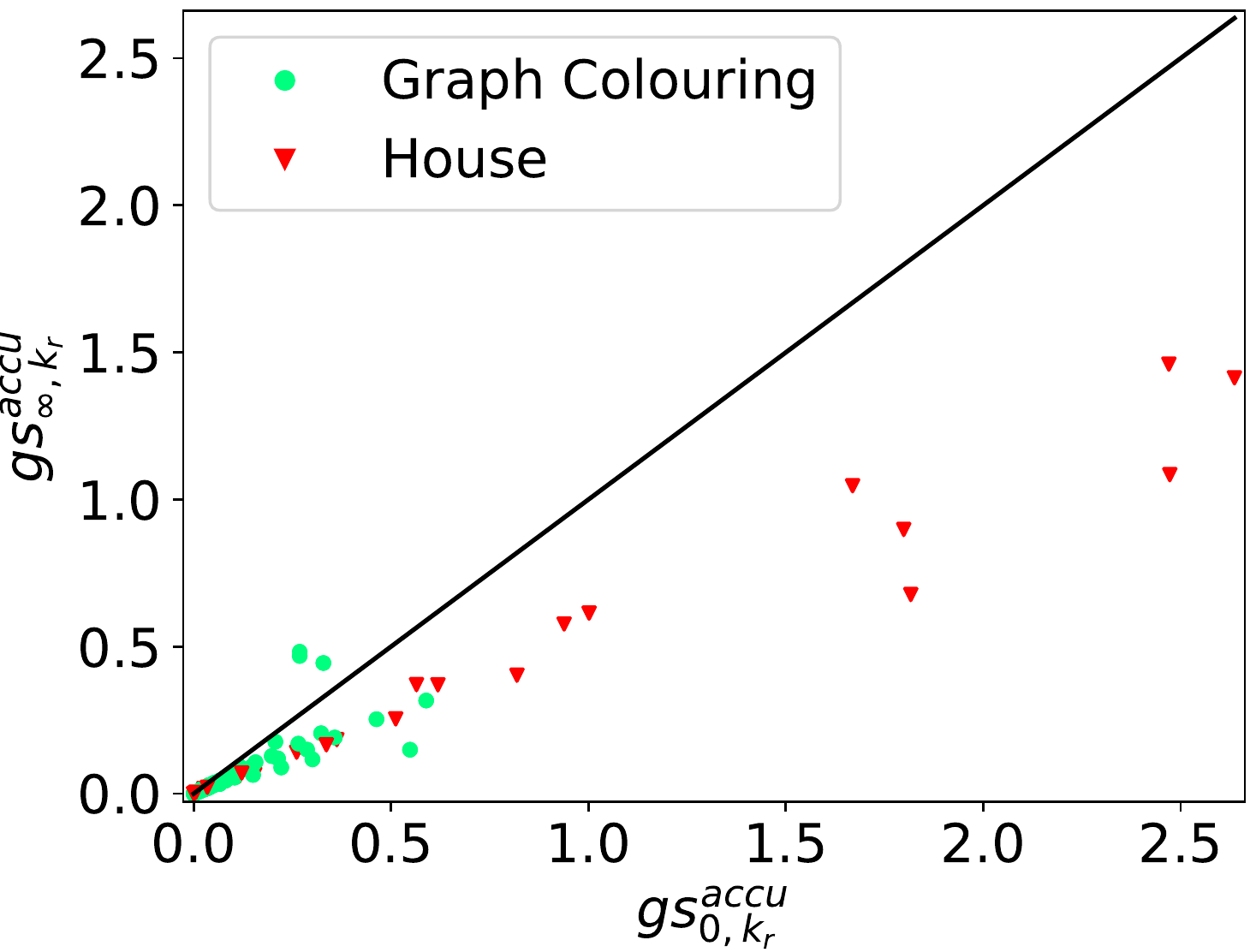}
  \caption{Guesses times $10^6$}
  \label{fig:scatter_constraints_l_vs_s_all_problems_g_accum}
\end{subfigure}
\caption{Time and guesses comparing $\gskunassignedaccu_{0,k_{ru}}$ to $\gskunassignedaccu_{\infty,k_{ru}}$ for $k_{ru} \in \{ 0, 1, \infty \}$}
\label{fig:scatter_constraints_l_vs_s_all_problems_time_g_accum}
\end{figure}

\cref{fig:scatter_constraints_l_vs_s_all_problems_time_g_accum} shows the same instances for the accumulator variants of the same grounding strategies.
Again, the general pattern indicates that permissive grounding of constraints ($k_{co}= \infty$) improves performance.
Comparing those plots to \cref{fig:scatter_constraints_l_vs_s_all_problems_time_g} shows a noticeable change of the performance in Graph Colouring instances.
Most of the data points gather along the diagonal near the origin and a small (but more visible) number
of outliers is distributed near both axes, which means that some Graph Colouring instances were hard to solve for $k_{co} = 0$ and some were hard to solve for $k_{co} = \infty$.
Deeper analysis of the solver revealed that in these cases the branching heuristic completely leads the search astray, resulting in more guesses to solve the problem and to invest more time in propagation.

\begin{figure}[t]
\centering
\begin{subfigure}{.48\textwidth}
  \centering
  \includegraphics[width=\linewidth]{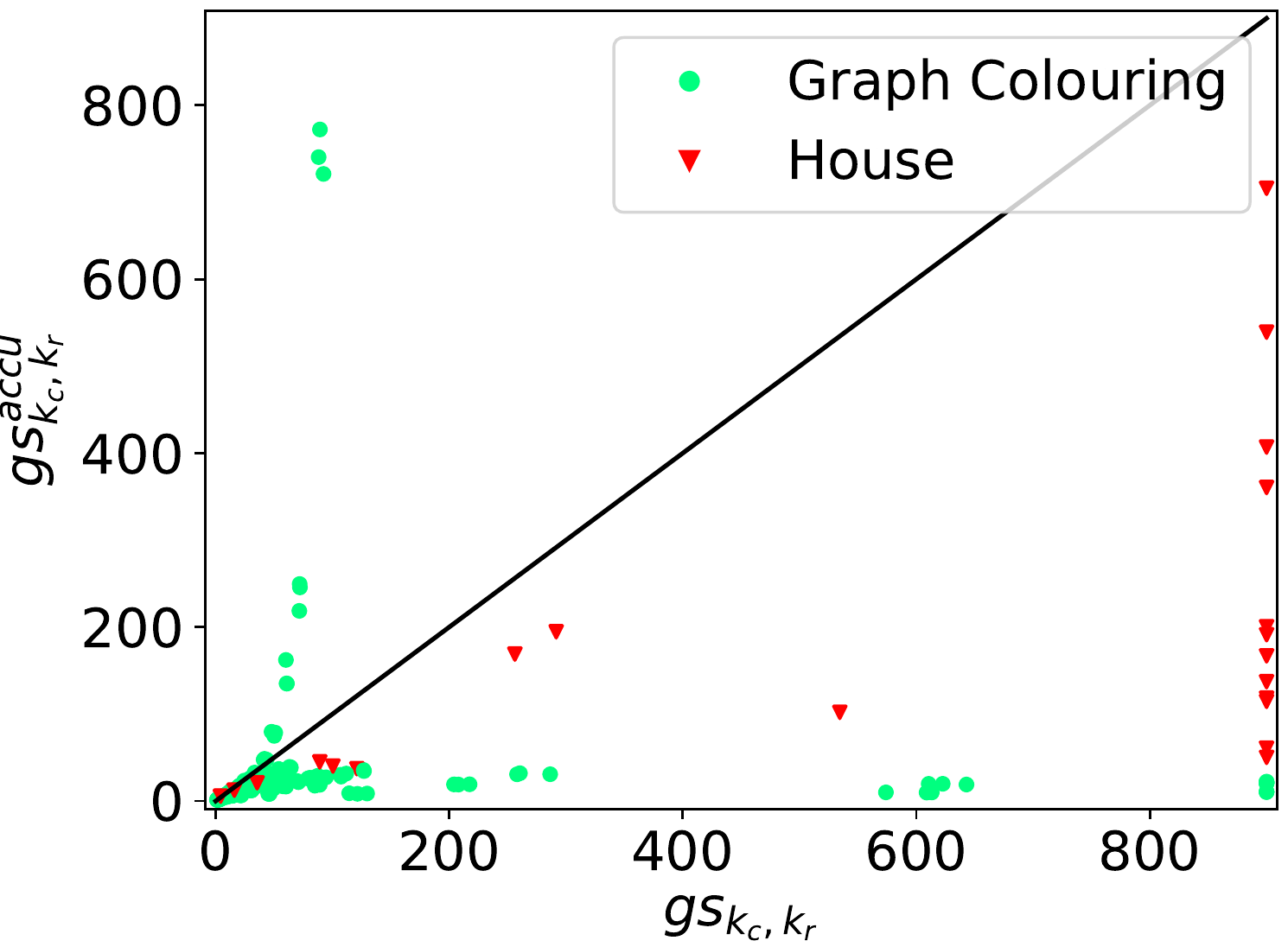}
  \caption{Time consumption (s)}
  \label{fig:scatter_accum_vs_default_all_problems_time}
\end{subfigure}%
\hspace*{\fill}
\begin{subfigure}{.48\textwidth}
  \centering
  \includegraphics[width=\linewidth]{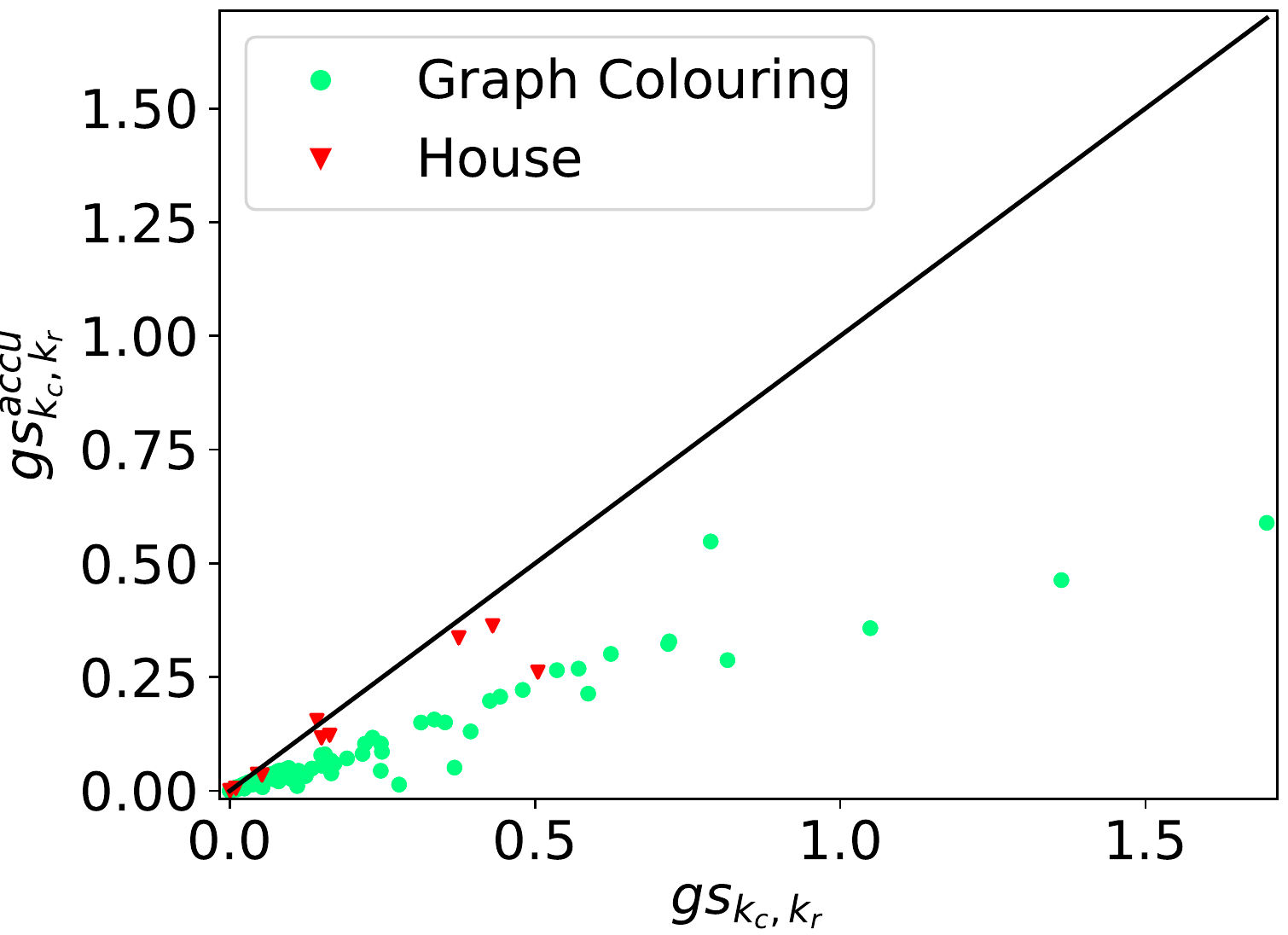}
  \caption{Guesses times $10^6$}
  \label{fig:scatter_accum_vs_default_all_problems_g}
\end{subfigure}
\caption{Time and guesses comparing $\gskunassigned_{k_{co},k_{ru}}$ to $\gskunassignedaccu_{k_{co},k_{ru}}$ for $k_{co}, k_{ru} \in \{ 0, 1, \infty \}$}
\label{fig:scatter_accum_vs_default_all_problems_g_time}
\end{figure}

In \cref{fig:scatter_accum_vs_default_all_problems_g_time}, we compare results for accumulator grounding strategies to their variants without accumulator.
We observe a similar pattern as in \cref{fig:scatter_constraints_l_vs_s_all_problems_time_g_accum}:
while House is clearly able to benefit
from the accumulator, effects are mixed for Graph Colouring. Visually, outliers dominate the plot but most data points are near the origin and below the diagonal.

\begin{figure}[t]
\centering
\begin{subfigure}{.48\textwidth}
  \centering
  \includegraphics[width=\linewidth]{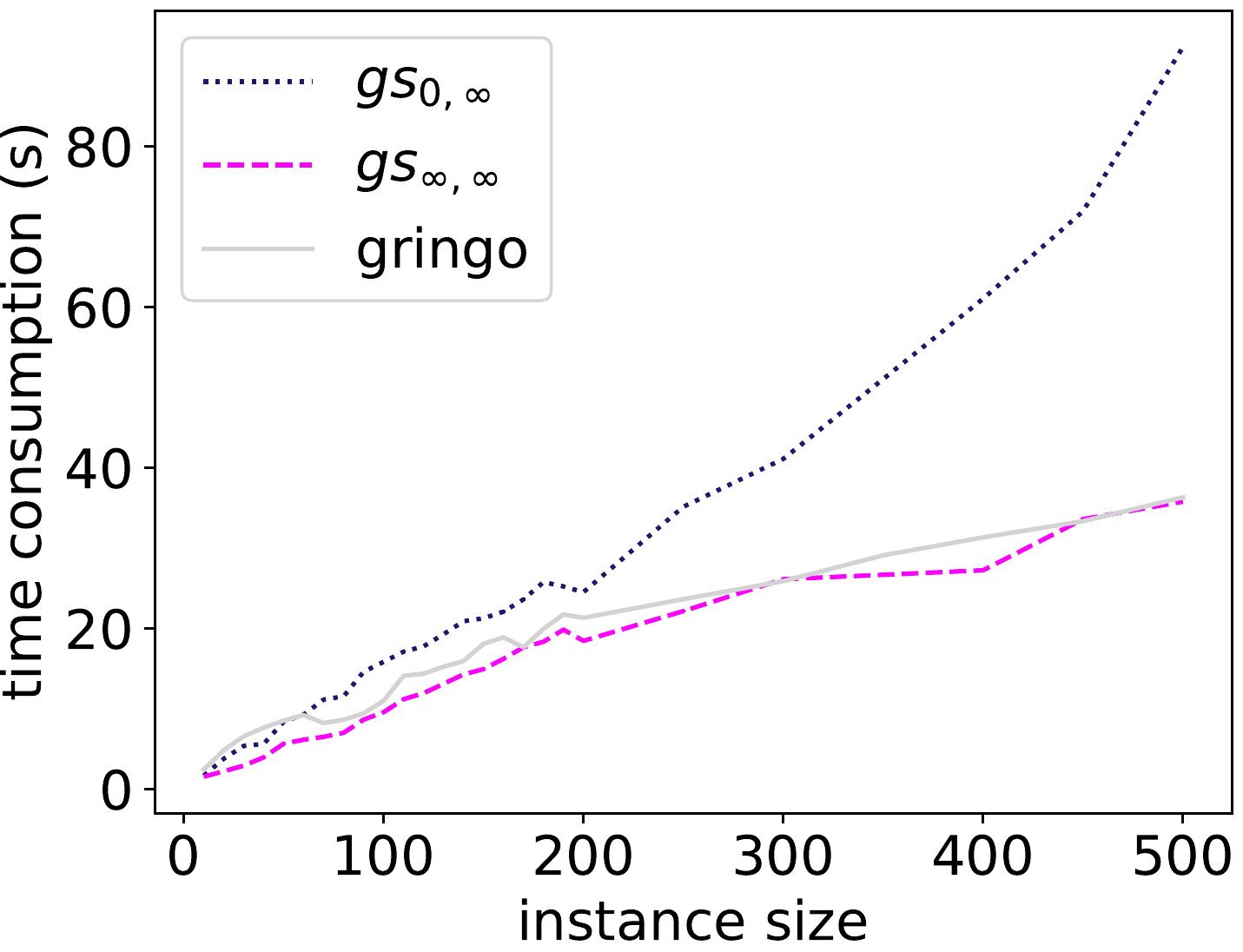}
  \caption{$\frac{E}{V} \approx 16$, $C = 3$}
  \label{fig:line_grounderconfigs_GraphColouring_ev16_c3_time}
\end{subfigure}%
\hspace*{\fill}
\begin{subfigure}{.48\textwidth}
  \centering
  \includegraphics[width=\linewidth]{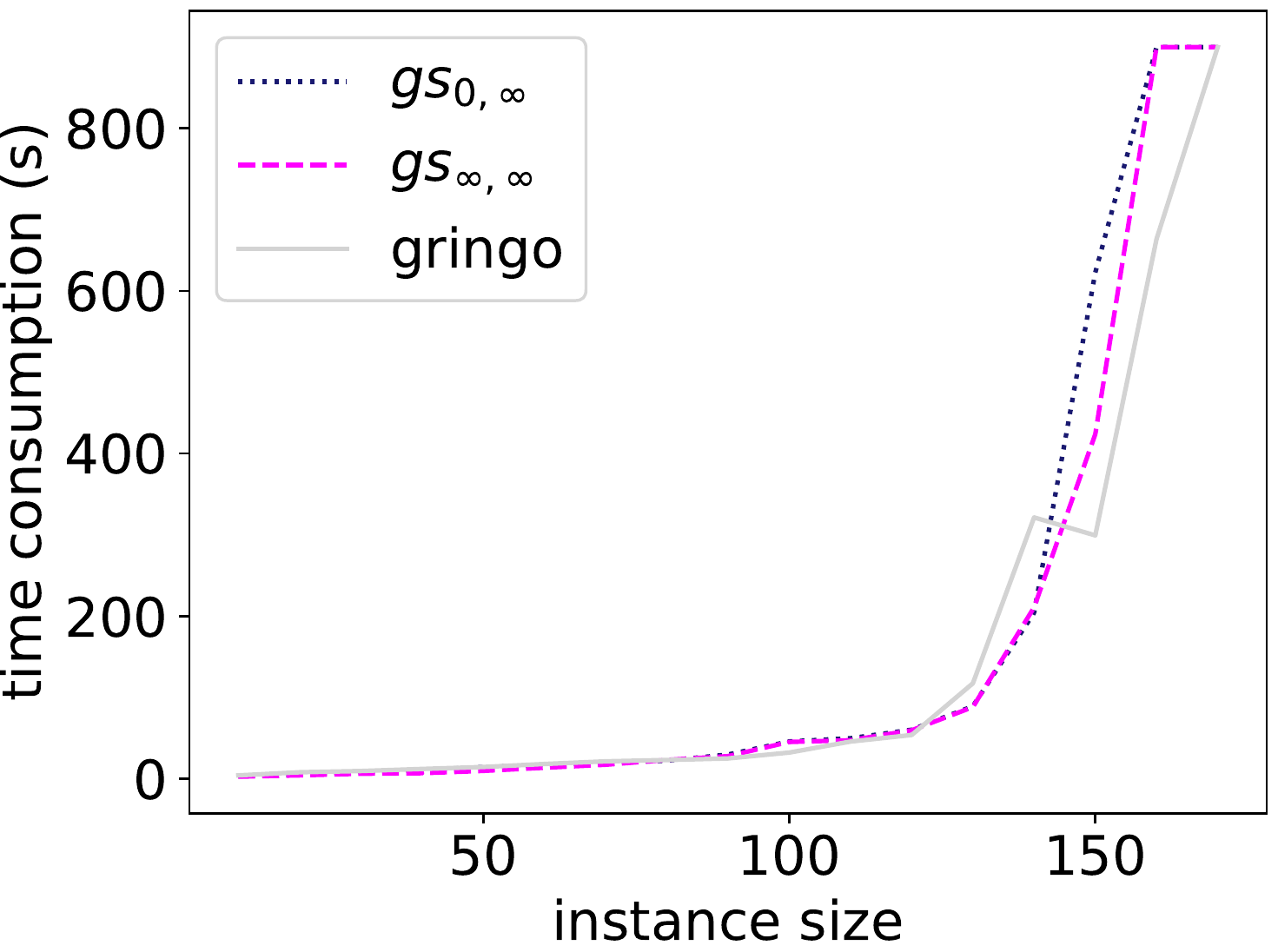}
  \caption{$\frac{E}{V} \approx 16$, $C = 5$}
  \label{fig:line_grounderconfigs_GraphColouring_ev16_c5_time}
\end{subfigure}
\caption{Time consumption for two classes of Graph Colouring instances (without accumulator)}
\label{fig:line_grounderconfigs_GraphColouring_ev16_c3_c5_time}
\end{figure}

\Cref{fig:line_grounderconfigs_GraphColouring_ev16_c3_c5_time} offers a different perspective on time consumption data for two classes of Graph Colouring instances.
For each instance size (number of nodes), the median time consumption of selected grounding strategies on all 11 instances is plotted.
From the range of lazy grounding strategies, two representatives are shown, comparing strict to permissive grounding of constraints.\footnote{In Graph Colouring, changing the value of $k_{ru}$ does not show an effect on the number of guesses needed to find an answer set.
  This is likely due to the encoding containing only one non-constraint rule whose body is not fully determined by facts.}
These are contrasted with \alphaslv's performance when working on a fully ground input produced by \slv{gringo} version 5.2.2.
It appears that instances with $\frac{E}{V} \approx 16$ and three colours were able to benefit greatly from permissive grounding of constraints, while performance was rather unaffected by change of grounding strategies when five colours were used instead. For both classes we observe that having the full upfront grounding provides good performance compared to lazy grounding, which is in line with heuristics being fully informed.
The $k$-unassigned grounding strategy with permissive grounding of constraints, however, performs similarly well.
We observed that permissive grounding of constraints also reduces memory usage.
We assume this to be caused by the current lack of learned nogood forgetting, i.e.~the longer \alphaslv runs the more learned nogoods are kept in memory.
Due to space constraints no data on memory consumption is shown here.

Overall, we observe that lazy grounding enables a whole new range of lazy-grounding strategies that face other challenges than previous approaches at grounding.
Most importantly, in lazy grounding
rules and constraints grounded earlier than necessary have a great effect on solving performance, because they inform the heuristics about the search space.
While we cannot give a definite answer on which grounding strategy is the best, we uncovered a whole new field of possible strategies and identified some that improve efficiency significantly.

\section{Conclusions and Future Work}
\label{sec:conclusion}

In this work we introduced a field of novel grounding strategies for lazy-grounding ASP evaluation.
Grounding lazily as little as possible adversely affects heuristics and search performance, because of the limited view of the search space.
Our investigation aimed at new ways to offset this restriction while keeping the benefits of lazy grounding to avoid the grounding bottleneck.
The main contribution of this paper is the introduction and formal characterization of various classes of grounding strategies (\enquote{degrees of laziness}), like $k$-unassigned grounding strategies and accumulator-based ones, which allow compromises between lazily grounding as little as possible and the traditional grounding upfront.
Experimental results show a clear improvement over existing lazy-grounding strategies and that permissive grounding of constraints usually improves solving performance, while the performance improvements from other grounding strategies depend on the problem to be solved.
Permissive lazy grounding of constraints could become the new default for \alphaslv\ and may be applied in other lazy-grounding solvers.

Our work considers grounding from a very different (lazy) perspective than previous works on (upfront) grounding.
As such, it cannot provide the conclusion but rather the beginning of a larger investigation on the effects of lazy-grounding strategies on solving performance.
Future work may explore syntactic features of answer-set programs to automatically select an efficient grounding strategy and investigate connecting lazy grounding more closely with search heuristics.

\subsection*{Acknowledgements}
This work has been conducted in the scope of the research project \textit{DynaCon (FFG-PNr.: 861263)}, which is funded by the Austrian Federal Ministry of Transport, Innovation and Technology (BMVIT) under the program ``ICT of the Future" between 2017 and 2020 (see \url{https://iktderzukunft.at/en/} for more information).

%
%
%
\bibliography{bibliography}
\bibliographystyle{splncs04}
\end{document}